\newtheorem{theorem}{Theorem}  
\title{A Note on High-Probability versus In-Expectation Guarantees of Generalization Bounds in Machine Learning}
\author{
  Alexander Mey \\
  Delft University of Technology\\
  \texttt{a.mey@tudelft.nl} \\
  %% examples of more authors

  %% \AND
  %% Coauthor \\
  %% Affiliation \\
  %% Address \\
  %% \texttt{email} \\
  %% \And
  %% Coauthor \\
  %% Affiliation \\
  %% Address \\
  %% \texttt{email} \\
  %% \And
  %% Coauthor \\
  %% Affiliation \\
  %% Address \\
  %% \texttt{email} \\
}
\begin{document}
\maketitle

\begin{abstract}
Statistical machine learning theory often tries to give generalization guarantees of machine learning models. Those models naturally underlie some fluctuation, as they are based on a data sample. If we were unlucky, and gathered a sample that is not representative of the underlying distribution, one cannot expect to construct a reliable machine learning model. Following that, statements made about the performance of machine learning models have to take the sampling process into account. The two common approaches for that are to generate statements that hold either in high-probability, or in-expectation, over the random sampling process. In this short note we show how one may transform one statement to another. As a technical novelty we address the case of unbounded loss function with the so called the witness condition.
\end{abstract}

% keywords can be removed
\keywords{Statistical Learning\and Machine Learning \and High-Probability \and In-Expectation \and Unbounded Loss Functions \and Generalization Bounds \and Excess Risk Bounds}

\section{Introduction}
In statistical learning theory one often tries to provide generalization guarantees for machine learning models. Machine learning models are based on training data, and thus vary naturally based on the quality of this data. Providing performance guarantees we thus have to take the sampling process of the data into account, and can only generate theoretical guarantees with \emph{high-probability} or \emph{in-expectation} over the training data generated in the sampling process. In statistical learning theory one finds more often high-probability results, as also formalized through the PAC-learning framework \cite[Chapter 3]{understanding}, but not exclusively. 

Regarding this, one might ask the question how to translate one type of guarantee to the another. Usually one cannot assume that there is a lossless transformation between those types of results, and we have to derive both individually to gain optimal guarantees. \cite{svmexpectation} for example showed already early on that a hard-margin support vector machine has an expected error bounded by $O\left(\mathbb{E}[\min\{d+1, \frac{1}{\eta^2_{n+1}} \}] \frac{1}{n+1}\right)$. Here $d$ is the dimension of the input space, $\eta$ is the resulting margin of the support vector machine (and thus a random variable over which we take an expectation), and $n$ is the sample size. Most importantly we observe that the expected error drops as $\frac{1}{n}$, while this convergence rate was only very recently shown in a high-probability statement \citep{bousquet}. 

Although there is no general lossless transformation between high-probability and in-expectation statements,
 we nevertheless want to show how \emph{a} transformation is still possible. While this in itself may be nothing new for versed researchers in statistical learning, we additionally address the case of unbounded loss functions, which is less well known. For this case one needs additional assumptions to be able to translate the high-probability result to an in-expectation result, and in this paper we use the by \cite{witnesspaper} recently introduced witness condition for that.

\section{Preliminaries}
We assume that there is an unknown probability distribution $\mathbb{P}$ on a joint space $\mathcal{X} \times \mathcal{Y}$. Here $\mathcal{X}$ should be thought of the feature, or covariate, of an object, based on which we are supposed to predict a response variable $\mathcal{Y}$. Given a set of hypothesis predictors $\mathcal{F}$, with $f:\mathcal{X} \to \mathcal{Y}$ for each $f \in \mathcal{F}$, we try to find an $f \in \mathcal{F}$ with a small risk $R(f):=\mathbb{E}_{x,y} \left[ l(f(x),y) \right]$. Here $l:\mathcal{Y} \times \mathcal{Y} \to \mathbb{R}$ is a loss function that measures the performance of the predictor $f$. We usually find a specific predictor, which we will denote by $f_n$, with the help of an $n$-sample $(x_i,y_i)_{1\leq i \leq n}$, where each $(x_i,y_i)$ is an i.i.d. draw from $\mathbb{P}$.  Furthermore let $f^*:=\arg\min\limits_{ f \in \mathcal{F}}R(f)$. We define $X_f:=R(f)-R(f^*)$ as the \emph{excess risk} of $f$ and set $L_f(x,y):=l(f(x),y)-l(f^*(x),y)$ as the point-wise excess loss.
\subsection{High-Probability and In-Expectation}
We now define what we mean by a high-probability and an in-expectation statement. As before let $f_n \in \mathcal{F}$ be a predictor based on an $n$-sample. Note that $X_{f_n}$ is then a random variable over the sampling process, and $\mathbb{E}_n [X_{f_n}]$ denotes the expectation of $X_{f_n}$ over the randomly drawn $n$-sample. We say that $X_{f_n}$ is with probability of at least $(1-\delta)$ smaller than $\epsilon(\delta,n)$, for $\epsilon:[0,1] \times \mathbb{N} \to \mathbb{R}$, if for all $\delta >0$ and $n \in \mathbb{N}$
\begin{equation} \label{eq:highprobability}
P(X_{f_n}>\epsilon(\delta,n))<\delta.
\end{equation}
In statistical learning theory one would often come across the equivalent statement, that $X_{f_n}=R(f_n)-R(f^*) \leq \epsilon(\delta,n)$ with probability of at least $1-\delta$.

Similarly we say that
$X_{f_n}<\gamma(n)$ holds in-expectation if
\begin{equation} \label{eq:inexpectation}
\mathbb{E}_n[X_{f_n}] < \gamma(n).
\end{equation}
The question we want to answer in this short note is how we can relate $\delta,\epsilon(\delta,n)$ to $\gamma(n)$. In particular we are interested in how well we can preserve the dependence on the sample size $n$.
\section{High-Probability Implies In-Expectation}
Obtaining an in-expectation result from a high-probability statement is problematic if we might choose with a very small probability a very bad predictor $f_n$. If the loss function is unbounded, so might be then the excess risk $X_{f_n}$, and we cannot expect to find a general way to transform a high-probability result into in-expectation result: The excess risk could be $0$ with very high probability, but arbitrarily large otherwise, making the expected excess risk arbitrarily large as well. The \emph{witness condition} rules this behaviour out.

\paragraph{The Witness Condition}
The intuition on the witness condition given by \cite{witnesspaper} is: `the witness condition says that whenever $f \in \mathcal{F}$ is worse than $f^*$ in-expectation, the probability that we \emph{witness} this in our training example should not be negligibly small'. The formal definition is the following. A collection of random variables $L_f$, $f \in \mathcal{F}$, fulfills the $(u,c)$-witness condition, with $u>0$ and $c \in (0,1]$, if for all $f \in \mathcal{F}$
\begin{equation} \label{eq:witness}
\mathbb{E}[L_f \cdot I(L_f \leq u)] \geq c \mathbb{E} [L_f],
\end{equation}
where $I(A)=1$ if $A$ is true and $I(A)=0$ otherwise, and $
\mathbb{E}[L_f]$ is short for $\mathbb{E}_{x,y}[L_f(x,y)]$.\footnote{\cite{witnesspaper} also have a generalization of the witness condition, where $u$ is replaced with a function $\tau(\mathbb{E} L_f)$. With this more general definition our proof of Theorem \ref{thm:witness} would actually not work.} For our purpose we think of $L_f$ as the previously defined point-wise excess loss.
The witness condition is actually fairly weak, as it can for example still hold for heavy-tailed distributions. See \cite{witnesspaper} for more intuition and details, in particular Section 5.2 gives examples in which the witness condition holds.

The following theorem describes a possible transformation from a high-probability statement to an in-expectation statement under the witness condition. For a technical reason we first define with abuse of notation $u:=\max \{ u,\epsilon(\delta,n)\}$. This is fine as when the $(u,c)$-witness condition holds, so does then the $(\max \{ u,\epsilon(\delta,n)\},c)$-witness condition, as the max operator increases the left hand side of Inequality \eqref{eq:witness}.
\begin{theorem} \label{thm:witness}
Assume that $X_{f_n}$ is with probability of at least $1-\delta$ smaller than $\epsilon(\delta,n)$. If $\{L_{f}\}_{f \in \mathcal{F}}$ fulfills the $(u,c)$ witness condition, then 
\begin{equation} \label{expectation}
\mathbb{E}_n[X_{f_n}] \leq \frac{1}{c}\left[ \epsilon(\delta,n)+(u-\epsilon(\delta,n))\delta \right].
\end{equation}
Note that this also implies the case of bounded loss functions. In this case, there exists a $B \in \mathbb{R}$, such that $X_f \leq B$ for all $f \in \mathcal{F}$. We then have a witness condition with $u = B$ and $c=1$.
\end{theorem}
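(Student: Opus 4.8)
The plan is to use the witness condition to pass from the expected excess risk to a \emph{truncated} expected excess loss that is controlled both by $u$ and by the excess risk itself, and then to split the expectation over the sample according to whether the high-probability event holds. The first thing I would record is that, by the definitions in the preliminaries, $\mathbb{E}[L_f] = R(f) - R(f^*) = X_f$ for every fixed $f$, so that the witness condition \eqref{eq:witness} rearranges to $X_{f_n} \le \frac{1}{c}\,\mathbb{E}[L_{f_n} I(L_{f_n}\le u)]$. Because the witness condition holds for all $f \in \mathcal{F}$ simultaneously, it holds for the (random) realized predictor $f_n$ path-by-path, so this inequality is valid for every draw of the sample and I may safely take the expectation $\mathbb{E}_n$ afterwards.

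Write $g(f) := \mathbb{E}[L_f I(L_f \le u)]$. The two bounds I would establish on $g$ are: (i) $g(f) \le u$, since the integrand $L_f I(L_f \le u)$ never exceeds $u$ pointwise; and (ii) $g(f) \le X_f$, which follows by writing $X_f = g(f) + \mathbb{E}[L_f I(L_f > u)]$ and noting that on the event $\{L_f > u\}$ we have $L_f > u > 0$, so the discarded term is nonnegative. Together these give $g(f_n) \le \min\{X_{f_n}, u\}$ for every realized $f_n$.

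Next I would introduce the high-probability event $A := \{X_{f_n} \le \epsilon(\delta,n)\}$, for which the hypothesis \eqref{eq:highprobability} gives $P(A^c) < \delta$. Splitting the expectation of $g(f_n)$ over $A$ and $A^c$ and using the two bounds: on $A$, bound (ii) together with $X_{f_n}\le\epsilon(\delta,n)$ yields $g(f_n)\le \epsilon(\delta,n)$, while on $A^c$, bound (i) yields $g(f_n)\le u$. Hence
$$\mathbb{E}_n[g(f_n)] \le \epsilon(\delta,n)\,P(A) + u\,P(A^c) = \epsilon(\delta,n) + (u-\epsilon(\delta,n))\,P(A^c).$$
Combining with $X_{f_n}\le \frac{1}{c}g(f_n)$ and taking expectations then gives the claim once I invoke $P(A^c)<\delta$.

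The step I expect to require the most care is the final one: I must keep $P(A)$ and $P(A^c)$ as a complementary pair rather than crudely bounding $P(A)\le 1$, since only then does the $-\epsilon(\delta,n)\delta$ cross-term in \eqref{expectation} survive. Crucially, this also uses the convention $u := \max\{u,\epsilon(\delta,n)\}$ recorded just before the theorem, which guarantees $u - \epsilon(\delta,n) \ge 0$, so that replacing $P(A^c)$ by the larger $\delta$ is indeed an upper bound. The bounded-loss remark then follows immediately: if the excess loss is bounded by $B$, then $I(L_f \le B)\equiv 1$, so \eqref{eq:witness} holds with equality for $u=B$ and $c=1$, and substitution into \eqref{expectation} recovers the familiar $\epsilon(\delta,n) + (B-\epsilon(\delta,n))\delta$.
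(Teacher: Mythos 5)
Your proposal is correct and follows essentially the same route as the paper's proof: apply the witness condition pathwise to get $X_{f_n}\le \frac{1}{c}\,\mathbb{E}_{x,y}[L_{f_n}I(L_{f_n}\le u)]$, split the outer expectation over the event $\{X_{f_n}\le\epsilon(\delta,n)\}$ and its complement, bound the truncated quantity by $u$ on the bad event and by $X_{f_n}\le\epsilon(\delta,n)$ on the good event, and use $u\ge\epsilon(\delta,n)$ from the redefinition of $u$ to replace $P(A^c)$ by $\delta$. If anything, your justification of $g(f)\le X_f$ via the nonnegativity of the discarded term $\mathbb{E}[L_f I(L_f>u)]$ is slightly more careful than the paper's brief appeal to $I(\cdot)\le 1$, since the excess loss $L_f$ can be negative pointwise.
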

\begin{proof}
For convenience we will ignore the dependence on $n$ and $\delta$ for $\epsilon(\delta,n)$ and just write $\epsilon$. 
\begin{align} \label{zero}
& \mathbb{E}_n[X_{f_n}]=\mathbb{E}_n [\mathbb{E}_{x,y} [L_{f_n}]] \\ \label{one}
& \leq \frac{1}{c} \left( \mathbb{E}_n \left[\mathbb{E}_{x,y} [L_{f_n} \cdot I(L_{f_n} \leq u)]\right]   \right) \\ \label{two}
 & =\frac{1}{c} \left( \mathbb{E}_n [\mathbb{E}_{x,y}[ L_{f_n} \cdot I(L_{f_n} \leq u) ]\cdot(I(X_{f_n}> \epsilon)+I(X_{f_n} \leq \epsilon))]\right) \\ \label{three}
 &= \frac{1}{c} \left( \mathbb{E}_n [\mathbb{E}_{x,y}[ L_{f_n} \cdot I(L_{f_n} \leq u)] \cdot I(X_{f_n}> \epsilon)]+ \mathbb{E}_n [\mathbb{E}_{x,y} [L_{f_n} \cdot I(L_{f_n} \leq u) ]\cdot I(X_{f_n} \leq \epsilon)]\right) \\ \label{four}
 & \leq \frac{1}{c} \left( \mathbb{E}_n [u \cdot I(X_{f_n}> \epsilon)]+ \mathbb{E}_n [X_{f_n} \cdot I(X_{f_n} \leq \epsilon)]\right) \\ \label{five}
 & \leq \frac{1}{c}(u P(X_{f_n} > \epsilon)+\epsilon(1- P(X_{f_n} > \epsilon)) ) \\ \label{six}
 &=\frac{1}{c}((u-\epsilon) P(X_{f_n} > \epsilon)+\epsilon) \leq  \frac{1}{c}((u-\epsilon) \delta +\epsilon)
\end{align}
Here \eqref{zero}-\eqref{one} follows from the witness condition applied to $L_{f_n}$, and the fact that the expectation is a monotonic operator. Step \eqref{three}-\eqref{four} follows by noting that $\mathbb{E}_{x,y} L_{f_n} \cdot I(L_{f_n} \leq u) \leq u$ as well as that $X_{f_n}=\mathbb{E}_{x,y} L_{f_n}$ together with $I(\cdot) \leq 1$. The last steps follow from rearranging and our assumption $P(X_{f_n} >\epsilon) \leq \delta$, and that $u-\epsilon$ is always non-negative with our re-definition of $u=\max\{u,\epsilon(\delta,n)\}$.
\end{proof}
If we converge with high-probability, then for $n \rightarrow \infty$ we have that for all $\delta>0$ it holds that $\epsilon(\delta,n) \rightarrow 0$. Letting $\delta \rightarrow 0$ at an appropriate speed we see that the right hand side of \eqref{expectation} converges to 0.
Note, however, that $\delta$ is a free variable in Inequality \eqref{expectation}, so we can chose to obtain an optimal bound.
We are not aware of any closed form solution for the $\delta$ that minimizes this bound. For specific examples, however, we might be able to compute the exact rates.
\paragraph{Simple Example}
For $\epsilon(\delta,n)=\frac{1}{\delta n}$ one can show that $\delta=\sqrt{\frac{1}{u n}}$ optimizes the bound. It can be then computed that
\begin{equation}
\mathbb{E}[X_{f_n}] \leq \frac{1}{c}(2\sqrt{\frac{u}{n}}-\frac{1}{n}).
\end{equation}
We observe that the linear rate of $\frac{1}{n}$ for the high-probability rate drops to a $\frac{1}{\sqrt{n}}$ rate for the expectation case, so this example shows that we cannot expect to maintain the same rate after the transformation. It is unclear to us if that is an artefact of the proof technique, necessary because of the linear $\frac{1}{\delta}$ or if we can only preserve the $\frac{1}{n}$ rate under stronger additional assumptions.

\paragraph{Statistical Learning Example}
In statistical learning theory one often observes that $\epsilon(\delta,n)=\ln(\frac{1}{\delta})\frac{1}{n}$ \citep[Theorem 2.5]{foundations}. Setting $\delta=\frac{1}{n}$ we then observe that $\mathbb{E}[X_{f_n}]\leq \frac{1}{c}(\frac{\ln(n)}{n}+(u-\frac{\ln(n)}{n})\frac{1}{n})$. So in this setting we lose at most a factor of $\ln(n)$ when we transform a high-probability to an in-expectation statement.

\section{In-Expectation Implies With High-Probability}
That an in-expectation result implies a high-probability result follows directly from Markov's Inequality, and the fact that the excess risk is always non-negative.
\begin{theorem}
Let $X_{f_n}$ be a non-negative random variable, as for example the excess risk of a trained model $f_n$, and let $\epsilon>0$. If $\mathbb{E}_n [X_{f_n}] \leq \gamma(n)$, then
\begin{equation}
    P(X_{f_n} > \epsilon) \leq \frac{\gamma(n)}{\epsilon}. 
\end{equation}
\end{theorem}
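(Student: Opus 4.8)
The plan is to apply Markov's inequality directly, since the hypothesis supplies control over the expectation $\mathbb{E}_n[X_{f_n}]$ and the random variable $X_{f_n}$ is assumed non-negative, which are exactly the two ingredients Markov's inequality requires. Recall that for any non-negative random variable $Z$ and any $\epsilon>0$, Markov's inequality states $P(Z \geq \epsilon) \leq \mathbb{E}[Z]/\epsilon$. The whole argument is a one-step instantiation of this classical fact, followed by a substitution of the in-expectation hypothesis.

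First I would instantiate Markov's inequality with $Z = X_{f_n}$, where the expectation is taken over the random $n$-sample. Since $X_{f_n} \geq 0$ by assumption (the excess risk is non-negative because $f^*$ minimizes the risk over $\mathcal{F}$), this yields
\[
P(X_{f_n} > \epsilon) \leq P(X_{f_n} \geq \epsilon) \leq \frac{\mathbb{E}_n[X_{f_n}]}{\epsilon},
\]
where the first inequality just uses the trivial containment $\{X_{f_n} > \epsilon\} \subseteq \{X_{f_n} \geq \epsilon\}$ to reconcile the strict inequality in the statement with the non-strict form in which Markov's inequality is most commonly phrased.

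Second, I would substitute the hypothesis $\mathbb{E}_n[X_{f_n}] \leq \gamma(n)$ into the right-hand side, which gives
\[
P(X_{f_n} > \epsilon) \leq \frac{\mathbb{E}_n[X_{f_n}]}{\epsilon} \leq \frac{\gamma(n)}{\epsilon},
\]
the claimed bound.

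There is no genuine obstacle here; the proof is essentially immediate once Markov's inequality is invoked. The only point demanding any attention at all is the non-negativity hypothesis, which is indispensable: without it Markov's inequality fails outright, so the reduction from an in-expectation guarantee to a high-probability guarantee would not go through. For the excess risk this non-negativity is automatic by the definition of $f^*$, which is precisely why the transformation in this direction is so much simpler than the reverse direction handled in Theorem \ref{thm:witness}.
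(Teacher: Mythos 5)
Your proof is correct and matches the paper exactly: the paper states before the theorem that the result ``follows directly from Markov's Inequality, and the fact that the excess risk is always non-negative,'' which is precisely your one-step argument. Your added care about the strict versus non-strict inequality via $\{X_{f_n} > \epsilon\} \subseteq \{X_{f_n} \geq \epsilon\}$ is a fine detail the paper leaves implicit.
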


Setting now $\frac{\gamma(n)}{\epsilon}=\delta$ and solving for $\epsilon$ we get that $X_{f_n}$ is with probability of at least $1-\delta$ smaller than $\epsilon(\delta,n)=\frac{\gamma(n)}{\delta}$. From statistical learning theory we would not expect that this is ideal, as the dependence in $\delta$ is usually given as a multiplicative factor of $\ln \frac{1}{\delta}$. 

We could, however, follow a slightly different strategy. Using a more general version of Markov's Inequality, we have the following theorem.

\begin{theorem}
Let $X_{f_n}$ be a non-negative random variable, as for example the excess risk of a trained model $f_n$, and let $\epsilon>0$. Furthermore let $\phi:\mathbb{R}_+ \to \mathbb{R_+}$ be a monotonically increasing non-negative function. Then \begin{equation}
    P(X_{f_n} > \epsilon) \leq \frac{\mathbb{E}_n [\phi(X_{f_n})]}{\phi(\epsilon)}.
\end{equation}
\end{theorem}

The case of $\phi$ being concave seems at first appealing, as Jensen's Inequality allows us then to further bound $\mathbb{E}_n[ \phi (X_{f_n}) ]\leq \phi(\gamma(n))$. It turns, however, out that for concave functions the identity is already ideal. More interesting cases may arise for convex functions, as for example $\phi(\lambda):=e^{\lambda}$. In this case we need to find directly bounds on $\mathbb{E}_n[ \phi(X_{f_n})]$ as Jensen cannot help in this case. If we would find a bound $\mathbb{E} \phi(X_{f_n}) \leq \beta(n)$, we can solve $\epsilon=\phi^{-1}(\frac{\beta(n)}{\delta})$. If $\phi$ is the exponential function for example, we can then solve $\epsilon=\ln(\beta(n))+\ln(\frac{1}{n})$. In this case we indeed have logarithmic dependence on $\delta$, although additive, and not multiplicative, as we would more often encounter in statistical learning.

More generally one could try to find assumptions under which we can also upper bound $\mathbb{E}_n [\phi X_{f_n}]$ by $\mathbb{E}_n[ X_{f_n}]$ for convex $\phi$. In this case we could again find high-probability bounds using the original in-expectation bound $\gamma(n)$. \citet[Lemma 13]{witnesspaper} actually do this, but reversed in the sense that they find a term that upper bounds $\mathbb{E}_n[ X_{f_n}]$ with the help of $\mathbb{E}_n[ \phi(X_{f_n})]$ for the \emph{concave} function $\phi(\lambda)=e^{-\lambda}$. Maybe similar assumptions would also help in the convex case. 

 \section*{Acknowledgement}
I greatly appreciate the comments of Marco Loog, Tom Viering and Christina G{\"o}pfert, as they pointed out technical and non-technical mistakes that happen even in the shortest of documents.

\bibliographystyle{plainnat}
\bibliography{references.bib}

\end{document}